\newif\ifarxiv
\newtheorem{thm}{Theorem}
\newcommand{\ie}{\textit{i.e.,} }
\newcommand{\eg}{\textit{e.g.,} }
\newcommand{\norm}[1]{\left\lVert{#1}\right\rVert}
\newcommand{\R}{\mathbb{R}}
\newcommand{\thmref}[1]{Theorem~\ref{#1}}
\renewcommand{\eqref}[1]{(\ref{#1})}
\newcommand{\currw}{w_t}
\newcommand{\nextw}{w_{t+1}}
\newcommand{\wi}{w_i}
\newcommand{\currwi}{w_{t,i}}
\newcommand{\w}{w}
\newcommand{\fs}{f_s}
\newcommand{\fst}{f_{s_t}}
\newcommand{\currg}{g_t}
\newcommand{\prevm}{m_{t-1}}
\newcommand{\currm}{m_t}
\newcommand{\prevv}{v_{t-1}}
\newcommand{\currv}{v_t}
\newcommand{\preve}{\eta_{t-1}}
\newcommand{\curre}{\eta_t}
\newcommand{\currei}{\eta_{t,i}}
\newcommand{\curra}{\alpha_t}
\newcommand{\normed}[1]{\left\lVert {#1} \right\rVert}
\newcommand{\btwo}{\beta_2}
\newcommand{\bone}{\beta_1}
\newcommand{\btwot}{\beta_{2,t}}
\newcommand{\bonet}{\beta_{1,t}}
\newcommand{\fdist}{D}
\newcommand{\smooth}{M}
\newcommand{\lowinf}{L}
\newcommand{\lowinft}{L_{t}}
\newcommand{\highinf}{H}
\newcommand{\highinft}{H_{t}}
\newcommand{\gradb}{G_\infty}
\newcommand{\gradbtwo}{G_2}
\newcommand{\supp}{\mathcal S}
\newcommand{\expec}[2]{\mathbb E_{#1} \left[ {#2} \right]}
\newcommand{\algrand}{\substack{S \sim \dist^T \\ t \sim \mathcal P(t|S)}}
\newcommand{\dist}{\mathcal D}
\icmltitlerunning{Domain-independent Dominance of Adaptive Methods}
\begin{document}
\twocolumn[
\icmltitle{Domain-independent Dominance of Adaptive Methods}

\icmlsetsymbol{equal}{*}

\begin{icmlauthorlist}
\icmlauthor{Pedro Savarese}{tti}
\icmlauthor{David McAllester}{tti}
\icmlauthor{Sudarshan Babu}{tti}
\icmlauthor{Michael Maire}{uc}
\end{icmlauthorlist}

\icmlaffiliation{tti}{TTI-Chicago}
\icmlaffiliation{uc}{University of Chicago}

\icmlcorrespondingauthor{Pedro Savarese}{savarese@ttic.edu}

\icmlkeywords{Machine Learning, ICML}

\vskip 0.3in
]

\printAffiliationsAndNotice{}
\begin{abstract}
From a simplified analysis of adaptive methods, we derive AvaGrad, a new optimizer which outperforms SGD on vision tasks when its adaptability is properly tuned. We observe that the power of our method is partially explained by a decoupling of learning rate and adaptability, greatly simplifying hyperparameter search. In light of this observation, we demonstrate that, against conventional wisdom, Adam can also outperform SGD on vision tasks, as long as the coupling between its learning rate and adaptability is taken into account. In practice, AvaGrad matches the best results, as measured by generalization accuracy, delivered by any existing optimizer (SGD or adaptive) across image classification (CIFAR, ImageNet) and character-level language modelling (Penn Treebank) tasks.
\end{abstract}
\section{Introduction}
\label{sec:intro}

Deep network architectures are becoming increasingly complex, often containing
parameters that can be grouped according to multiple functionalities, such as
gating, attention, convolution, and generation.  Such parameter groups should
arguably be treated differently during training, as their gradient statistics
might be highly distinct.  Adaptive gradient methods designate parameter-wise
learning rates based on gradient histories, treating such parameters groups
differently and, in principle, promise to be better suited for training complex
neural network architectures.

Nonetheless, advances in neural architectures have not been matched by progress
in adaptive gradient descent algorithms.  SGD is still prevalent, in
spite of the development of seemingly more sophisticated adaptive alternatives,
such as RMSProp \citep{rmsprop} and Adam \citep{adam}.  Such adaptive methods
have been observed to yield poor generalization compared to SGD in
classification tasks \citep{marginal}, and hence have been mostly adopted for
training complex models \citep{transformers, wgan}.  For relatively simple
architectures, such as ResNets \citep{resnet1} and DenseNets \citep{densenet},
SGD is still the dominant choice.

At a theoretical level, concerns have also emerged about the current crop of
adaptive methods.  Recently, \citet{amsgrad} has identified cases, even in the
stochastic convex setting, where Adam \citep{adam} fails to converge.
Modifications to Adam that provide convergence guarantees have been formulated,
but have shortcomings.  AMSGrad \citep{amsgrad} requires non-increasing
learning rates, while AdamNC \citep{amsgrad} and AdaBound \citep{adabound}
require that adaptivity be gradually eliminated during training.  Moreover,
while most of the recently proposed variants do not provide formal guarantees
for non-convex problems, the few current convergence rate analyses in the literature \citep{yogi,adamlike} do not match SGD's.  Section~\ref{sec:related} fully
details the convergence rates of the most popular Adam variants, along with
their shortcomings.

Our contribution is marked improvements to adaptive optimizers, from both
theoretical and practical perspectives.  At the theoretical level, we focus on
convergence guarantees, deriving new algorithms:

\vspace{-5.0pt}
\begin{itemize}[leftmargin=0.2in]
   \item{
      \textbf{Delayed Adam.} Inspired by \citet{yogi}'s analysis of Adam,
      Section~\ref{sec:dadam} proposes a simple modification for adaptive
      gradient methods which yields a provable convergence rate of
      $O(1 / \sqrt T)$ in the stochastic non-convex setting -- the same as SGD.
      Our modification can be implemented by \emph{swapping two lines of code}
      and preserves adaptivity without incurring extra memory costs.  To
      illustrate these results, we present a non-convex problem where Adam
      fails to converge to a stationary point, while Delayed Adam -- Adam with
      our proposed modification -- provably converges with a rate of
      $O(1 / \sqrt T)$.
   }
   \item{
      \textbf{AvaGrad.}
      Inspecting the convergence rate of Delayed Adam, we show that it would
      improve with an adaptive global learning rate, which self-regulates based
      on global statistics of the gradient second moments.  Following this
      insight, Section~\ref{sec:ava} proposes a new adaptive method, AvaGrad,
      whose hyperparameters decouple learning rate and adaptability.
   }
\end{itemize}

\pagebreak

Through extensive experiments, Section~\ref{sec:exp} demonstrates that AvaGrad
is not merely a theoretical exercise.  AvaGrad performs as well as both SGD and
Adam in their respectively favored usage scenarios.  Along this experimental
journey, we happen to disprove some conventional wisdom, finding adaptive
optimizers, including Adam, to be superior to SGD for training CNNs.  The
caveat is that, excepting AvaGrad, these methods need extensive grid search to outperform SGD, often requiring unconventional hyperparameter values to even yield competitive performance.

AvaGrad is a uniquely attractive adaptive optimizer, as it decouples the learning rate and its adaptability parameter, making hyperparameter search significantly faster. In particular, given a computational budget similar to SGD's, AvaGrad yields near best results over a wide range of tasks. 

\section{Preliminaries}
\label{sec:prelim}

\subsection{Notation}

For vectors $a = [a_1, a_2, \dots], b = [b_1, b_2, \dots] \in \R^d$, we use the
following notation:
   $\frac1{a}$ for element-wise division
      ($\frac1{a} = [\frac1{a_1}, \frac1{a_2}, \dots]$),
   $\sqrt a$ for element-wise square root
      ($\sqrt a = [\sqrt{a_1}, \sqrt{a_2}, \dots]$),
   $a + b$ for element-wise addition
      ($a+b = [a_1+b_1, a_2+b_2, \dots]$),
   $a \odot b$ for element-wise multiplication
      ($a \odot b = [a_1 b_1, a_2 b_2, \dots]$).
Moreover, $\normed{a}$ is used to denote the $\ell_2$-norm: other norms will be
specified whenever used (\eg $\normed{a}_\infty$).

For subscripts and vector indexing, we adopt the following convention: the
subscript $t$ is used to denote an object related to the $t$-th iteration of an
algorithm (\eg $\currw \in \R^d$ denotes the iterate at time step $t$); the
subscript $i$ is used for indexing: $\wi \in \R$ denotes the $i$-th coordinate
of $\w \in \R^d$.  When used together, $t$ precedes $i$: $\currwi \in \R$
denotes the $i$-th coordinate of $\currw \in \R^d$.

\subsection{Stochastic Non-Convex Optimization}

In the stochastic non-convex setting, we are concerned with the optimization
problem:
\begin{equation}
   \min_{\w \in \R^d} f(w) = \expec{s \sim \dist}{\fs(\w)}
\end{equation}
where $\dist$ is a probability distribution over a set $\supp$ of
``data points''.  We also assume that $f$ is $\smooth$-smooth in $w$, as is
typically done in non-convex optimization:
\begin{equation}
   \forall~w,\w'\;
      f (\w') \leq
      f (\w) +
      \langle \nabla f (\w), \w' - \w \rangle +
      \frac{\smooth}{2} \normed{\w - \w'}^2
\end{equation}

Methods for stochastic non-convex optimization are evaluated in terms of number
of iterations or gradient evaluations required to achieve small loss gradients.
This differs from the stochastic convex setting where convergence is measured
w.r.t.~suboptimality $f(\w) - \min_{\w \in \R^d} f(\w)$.  We assume that the
algorithm takes a sequence of data points $S = (s_1, \ldots, s_T)$ from which
it deterministically computes a sequence of parameter settings
$w_1, \ldots, w_T$ together with a distribution ${\cal P}$ over
$\{1,\ldots,T\}$.  We say an algorithm has a convergence rate of $O(g(T))$ if
$\expec{\algrand}{\normed{\nabla f(w_t)}^2}
   \leq O(g(T))$ where, as defined above, $f(w) = \expec{s\sim \dist}{\fs(w)}$.

We also assume that the functions $\fs$ have bounded gradients: there exists
some $\gradb$ such that
   $\normed{\nabla \fs (\w)}_\infty \leq \gradb$
for all $s \in \supp$ and $\w \in \R^d$.  Throughout the paper, we also let
$\gradbtwo$ denote an upper bound on $\norm{\nabla \fs (\w)}$.

\section{Related Work}
\label{sec:related}

Here we present a brief overview of optimization methods commonly used for
training neural networks, along with their convergence rate guarantees for
stochastic smooth non-convex problems.  We consider methods which, at each
iteration $t$, receive or compute a gradient estimate:
\begin{equation}
   \currg \coloneqq \nabla \fst(\currw), \quad s_t \sim \dist
\end{equation}
and perform an update of the form:
\begin{equation}
    \nextw = \currw - \curra \cdot \curre \odot \currm
\end{equation}
where $\curra \in \R$ is the \textbf{global learning rate},
$\curre \in \R^d$ are the \textbf{parameter-wise learning rates}, and
$\currm \in \R^d$ is the update direction, typically defined as:
\begin{equation}
   \currm = \bonet \prevm + (1 - \bonet) \currg
   \quad \textrm{and} \quad m_0 = 0.
\end{equation}
Non-momentum methods such as SGD, AdaGrad, and RMSProp \citep{rmsprop, adagrad}
have $\currm = \currg$ (\ie $\bonet = 0$), while momentum SGD and Adam
\citep{adam} have $\bonet \in (0,1)$.  Note that while $\curra$ can always be
absorbed into $\curre$, representing the update in this form will be convenient
throughout the paper.

SGD uses the same learning rate for all parameters, \ie $\curre = \vec 1$.
Although SGD is simple and offers no adaptation, it has a convergence rate of
$O(1 / \sqrt T)$ with either constant, increasing, or decreasing learning rates
\citep{nonconvex}, and is widely used when training deep networks, especially
CNNs \citep{resnet1, densenet}.  At the heart of its convergence proof is the
fact that
   $\expec{s_t}{\curra \cdot \curre \odot \currg} =
      \curra \cdot \nabla f(\currw)$.

Popular adaptive methods such as RMSProp \citep{rmsprop}, AdaGrad
\citep{adagrad}, and Adam \citep{adam} have
   $\curre = \frac1{\sqrt{\currv} + \epsilon}$,
where $\currv \in \R^d$ is given by:
\begin{equation}
   \currv = \btwot \prevv + (1-\btwot) \currg^2
   \quad \textrm{and} \quad v_0 = 0.
\end{equation}
As $\currv$ is an estimate of the second moments of the gradients, the
optimizer designates smaller learning rates for parameters with larger
uncertainty in their stochastic gradients.  However, in this setting $\curre$
and $s_t$ are no longer independent, hence
   $\expec{s_t}{\curra \cdot \curre \odot \currg} \neq
      \curra \cdot \expec{s_t}{\curre} \odot \nabla f(\currw)$.
This ``bias'' can cause RMSProp and Adam to present convergence issues, even in
the stochastic convex setting \citep{amsgrad}.

Recently, \citet{yogi} showed that, with a constant learning rate, RMSProp and Adam have a convergence rate of
$O(\sigma^2 + 1/T)$, where
   $\sigma^2 = \sup_{\w \in \R^d}
      \expec{s \sim \dist}{\normed{\nabla \fs(\w) - \nabla f(\w)}^2}$,
hence their result does not generally guarantee convergence. 

\citet{adamlike} showed that
AdaGrad and AMSGrad enjoy a convergence rate of $O(\log T / \sqrt T)$ when a decaying learning rate is used. Note
that both methods constrain $\curre$ in some form, the former with
   $\btwot = 1 - 1/t$ (adaptability diminishes with $t$),
and the latter explicitly enforces
   $\currv \geq v_j$ for all $j < t$ ($\curre$ is point-wise non-increasing).
In both cases, the method is less adaptive than Adam, and the rates above are worse than SGD's $O(1 / \sqrt T)$.
\section{SGD-like Convergence without Constrained Rates}
\label{sec:dadam}

We first take a step back to note the following: to show that Adam might not
converge in the stochastic convex setting, \citet{amsgrad} provide a
stochastic linear problem where Adam fails to converge w.r.t. suboptimality.
Since non-convex optimization is evaluated w.r.t. norm of the gradients, a
different instance is required to characterize Adam's behavior in this setting.

The following result shows that even for a quadratic problem, Adam indeed does
not converge to a stationary point:

\begin{thm}
   For any $\epsilon\geq0$ and constant $\btwot = \btwo \in [0,1)$, there is a
   stochastic convex optimization problem for which Adam does not converge to
   a stationary point.
   \label{thm:adamdiv}
\end{thm}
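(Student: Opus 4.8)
The plan is to lift the non-convergence example of \citet{amsgrad} --- which was built to defeat convergence in \emph{suboptimality} --- to one that instead defeats convergence of the \emph{gradient norm}, the quantity that must be driven to zero in the non-convex setting. First I would work in dimension $d = 1$ on the compact interval $\FF = [-1,1]$ with projected Adam (exactly as in \citet{amsgrad}), taking the genuinely quadratic objective $f(\w) = \tfrac12\w^2$, whose unique stationary point is the interior minimizer $\optw = 0$, with $\nabla f(\w) = \w$. The per-datum functions are $\fs(\w) = f(\w) + \xi_s\,\w$ for an \emph{asymmetric, mean-zero} perturbation: $\xi_s = C$ with probability $p = \tfrac1{C+1}$ and $\xi_s = -1$ otherwise, so that $\nabla\fs(\w) = \w + \xi_s$ has conditional mean $\nabla f(\w)$, $\EE_s[\fs] = f$, the problem is stochastic convex, and (projected) SGD on it converges at rate $O(1/\sqrt T)$. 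The scale $C$ --- along with the initialization $\w_1 = 1$, the choice $\bone = 0$ for clarity, and a small constant step size $\curra = \alpha$ --- is fixed large as a function of the given $\epsilon \ge 0$ and $\btwo$.

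The heart of the argument is to show that Adam's iterates stay bounded away from $\optw = 0$. Since $\currg = \currw + \xi_{s_t}$ with $|\currw| \le 1 \ll C$, the second-moment estimate is trapped from the outset: immediately after a ``rare'' step one has $\currv \ge (1-\btwo)C^2$, while over the geometrically long run of ``frequent'' steps until the next rare one, $\currv$ relaxes back to $O(1)$. The mechanism of \citet{amsgrad} then takes over: the large leftward gradient ($\approx C$, pointing toward $\optw$) always coincides with the spike in $\currv$, so its step $\alpha\,\currg/(\sqrt{\currv}+\epsilon)$ is only $O\!\left(\alpha/\sqrt{1-\btwo}\right)$, whereas the many small rightward gradients ($\approx \currw - 1$, pointing away from $\optw$) act while $\currv$ has decayed, so their steps are comparatively large. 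Choosing $C$ large enough relative to $\epsilon$ and $\btwo$ makes the net drift over one renewal cycle strictly positive --- toward the wrong boundary --- so that, after summing the one-step drift estimates and using compactness to keep everything uniform, $\EE[\currw] \ge \delta$ for every $t$, for some constant $\delta > 0$ independent of $T$. By Jensen's inequality, and since $\nabla f(\currw) = \currw$ on $\FF$,
\begin{equation}
   \expec{\algrand}{\normed{\nabla f(\currw)}^2}
      = \EE\!\left[\currw^2\right]
      \;\ge\; \bigl(\EE[\currw]\bigr)^2
      \;\ge\; \delta^2
      \qquad \text{for all } t ,
\end{equation}
which holds for the uniform output $\mathcal P$, and indeed for any output rule not concentrated on an initial segment; hence Adam's convergence rate is $\Omega(1)$, cannot be $O(1/\sqrt T)$, and Adam fails to reach a stationary point.

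I expect two sources of difficulty. The first, which does not arise in \citet{amsgrad} because there $\currg$ is independent of the iterate, is the coupling between $\currw$ and $\currv$ through $\fs$; I neutralize it by taking $C$ so large that $\currv$ is effectively driven by the noise $\xi_{s_t}$ alone, the $O(1)$ contribution of $\currw$ being a controlled perturbation. The second, and the main obstacle, is that the trajectory is random, so the per-cycle drift must be controlled \emph{in expectation and uniformly} over the region the iterates occupy, robustly to the random fluctuations of $\currv$ --- and, should one wish to allow $\bone \in [0,1)$, to the bounded geometric lag that momentum introduces, which shifts but does not reverse the sign of the drift. Making the notion of ``drift over a renewal cycle'' precise --- the cycle length is geometric, and the decay of $\currv$ within it must be tracked against the additive $\epsilon$ floor --- is where most of the bookkeeping lies.
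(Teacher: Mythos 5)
Your proposal is correct and follows essentially the same route as the paper's proof: both lift \citet{amsgrad}'s construction to the stationarity setting via a two-outcome problem on a compact interval where the rare, large, correctly-directed gradient always coincides with a spike in $\currv$ (so its step is damped to $O(\alpha/\sqrt{1-\btwo})$) while the common, small, wrongly-directed gradients act after $\currv$ has decayed, giving a net positive expected drift toward a boundary point at which the projected gradient does not vanish. The paper realizes this with a mixture of $C\w^2/2$ and $-\w$ on $[0,1]$ rather than your $\w^2/2$ plus asymmetric mean-zero linear noise on $[-1,1]$, and bounds the two branches of $\EE[\Delta_t]$ exactly as you outline (the rare branch by $-1/\sqrt{1-\btwo}$, the common branch via Jensen and an upper bound on $\EE[\prevv]$), so the differences are cosmetic.
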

\begin{proof}
   The full proof is given in Appendix \ref{sec:proof1} \footnote{See suplementary material for Appendices}.  The argument follows
   closely from \citet{amsgrad}, where we explicitly present a stochastic
   optimization problem:
   \begin{equation}
   \begin{split}
      & \min_{\w \in [0,1]} f(\w) \coloneqq \expec{s \sim \dist}{\fs(\w)} \\
      & \fs(\w) =
         \begin{cases}
            C \frac{\w^2}2, \quad
               \text{with probability } \quad
               p \coloneqq \frac{1+\delta}{C+1} \\
            -\w, \quad \text{otherwise}
         \end{cases}
   \end{split}
   \end{equation}
   We show that, for large enough $C$ (as a function of
   $\delta, \epsilon, \btwo$), Adam will move towards
   $\w = 1$ where $\nabla f(1) = \delta$, and that the constraint
   $w \in [0,1]$ does not make $w=1$ a stationary point.
\end{proof}

This result, like the one in \citet{amsgrad}, relies on the fact that $\curre$
and $s_t$ are correlated: upon a draw of the rare sample $C \frac{\w^2}2$, the
learning rate $\curre$ decreases significantly and Adam takes a small step in
the correct direction.  On the other hand, a sequence of common samples
increases $\curre$ and Adam moves faster towards $w=1$.

Instead of enforcing $\curre$ to be point-wise non-increasing in $t$
\citep{amsgrad}, which forces the optimizer to take small steps even for a long
sequence of common samples, we propose to simply have $\curre$ be independent
of $s_t$.  As an extra motivation for this approach, note that successful
proof strategies \citep{yogi} to analyzing adaptive methods include the
following step:
\begin{equation}
\begin{split}
   \expec{s_t}{\curre \odot \currg}
      &= \expec{s_t}{ \left(\preve + \curre - \preve \right) \odot \currg } \\
      &= \preve \odot \nabla f(\currw) +
         \expec{s_t}{ \left(\curre - \preve \right) \odot \currg}
\end{split}
\label{eq:proofstep}
\end{equation}
where bounding $\expec{s_t}{ \left(\curre - \preve \right) \odot \currg}$, seen
as a form of bias, is a key part of recent convergence analyses.  Replacing
$\curre$ by $\preve$ in the update equation of Adam removes this bias and can
be implemented by simply swapping lines of code (updating $\eta$ \textit{after}
$\w$), yielding a simple convergence analysis without hindering the adaptability of the method in any way.

Algorithm~\ref{alg:dadam} provides pseudo-code when applying this modification to Adam, yielding Delayed Adam.  The following Theorem shows that this
modification is enough to guarantee a SGD-like convergence rate of
$O(1 / \sqrt T)$ in the stochastic non-convex setting for general adaptive
gradient methods.

\begin{thm}
   \label{thm:dadamconv}
   Consider any optimization method which updates parameters as follows:
   \begin{equation}
      \nextw = \currw - \curra \cdot \curre \odot \currg
   \end{equation}
   where $\currg \coloneqq \nabla \fst(\currw)$, $s_t \sim \dist$, and
   $\curra, \curre$ are independent of $s_t$. Assume that $f(\w_1) - f(\w^\star) \leq \fdist$,
   $f(w) = \expec{s \sim \dist}{\fs(\w)}$ is $\smooth$-smooth, and
   $\normed{\nabla \fs (\w)}_\infty \leq \gradb$ for all
   $s \in \supp, w \in \R^d$.
   Moreover, let $Z = \sum_{t=1}^T \alpha_t \min_i \currei$.

   For $\curra = \gamma_t \sqrt{\frac{2 \fdist}{T \smooth \gradb^2}}$,
   if $p(Z|s_t) = p(Z)$ for all $s_t \in \supp$, then:
   \begin{equation}
   \begin{split}
      & \expec{\algrand}{\normed{\nabla f(\currw)}^2} \\
         & \quad\quad\quad \leq \sqrt{\frac{\smooth \fdist \gradb^2}{2T}} \cdot
            \expec{S \sim \dist^T}{\frac{\sum_{t=1}^T 1 +
               \gamma_t^2 \norm{\curre}^2}{\sum_{t=1}^T \gamma_t \min_i \currei}}
   \end{split}
   \label{eq:convrate}
   \end{equation}
   where $\cal P$ assigns probabilities
      $p(t) \propto \curra \cdot \min_i \currei$.
\end{thm}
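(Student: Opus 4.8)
I would run the classical descent-lemma argument that gives SGD an $O(1/\sqrt T)$ rate on smooth non-convex objectives, keeping the adaptive vector $\curre$ harmless by its independence from $s_t$, and only fold in the iterate-sampling distribution $\cal P$ at the very end.

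The first step is a one-step descent inequality. Apply $\smooth$-smoothness to $\currw,\nextw$ and substitute $\nextw-\currw=-\curra\,\curre\odot\currg$, which gives $f(\nextw)\le f(\currw)-\curra\dotp{\nabla f(\currw)}{\curre\odot\currg}+\tfrac{\smooth\curra^2}{2}\normed{\curre\odot\currg}^2$. Then take $\expec{s_t}{\cdot}$ conditioned on $s_1,\dots,s_{t-1}$, under which $\currw,\curra,\curre$ are fixed. Since $\curra,\curre$ are independent of $s_t$ and $\expec{s_t}{\currg}=\nabla f(\currw)$, the cross term equals $\curra\sum_i\currei(\nabla f(\currw))_i^2\ge\curra(\min_i\currei)\normed{\nabla f(\currw)}^2$, while $\normed{\curre\odot\currg}^2=\sum_i\currei^2\currgi^2\le\gradb^2\normed{\curre}^2$ holds even pointwise because $\abs{\currgi}\le\gradb$. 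This yields $\expec{s_t}{f(\nextw)}\le f(\currw)-\curra(\min_i\currei)\normed{\nabla f(\currw)}^2+\tfrac{\smooth\gradb^2}{2}\curra^2\normed{\curre}^2$.

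Next I would rearrange, take the full expectation over $S\sim\dist^T$, and sum over $t=1,\dots,T$: the $f$-terms telescope and $f(\w_1)-\expec{S\sim\dist^T}{f(w_{T+1})}\le f(\w_1)-f(\optw)\le\fdist$, so $\expec{S\sim\dist^T}{\sum_t\curra(\min_i\currei)\normed{\nabla f(\currw)}^2}\le\fdist+\tfrac{\smooth\gradb^2}{2}\expec{S\sim\dist^T}{\sum_t\curra^2\normed{\curre}^2}$. Plugging in $\curra=\gamma_t\sqrt{2\fdist/(T\smooth\gradb^2)}$ makes $\tfrac{\smooth\gradb^2}{2}\curra^2=\tfrac{\fdist}{T}\gamma_t^2$ and $\fdist=\tfrac{\fdist}{T}\sum_t 1$; dividing through by $\sqrt{2\fdist/(T\smooth\gradb^2)}$ then gives $\expec{S\sim\dist^T}{\sum_t\gamma_t(\min_i\currei)\normed{\nabla f(\currw)}^2}\le\sqrt{\smooth\fdist\gradb^2/(2T)}\cdot\expec{S\sim\dist^T}{\sum_t(1+\gamma_t^2\normed{\curre}^2)}$, which is \eqref{eq:convrate} except that $\cal P$-reweighting and division by $Z=\sum_t\curra\min_i\currei$ have not yet been applied.

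Finally I would connect this to $\expec{\algrand}{\normed{\nabla f(\currw)}^2}$. As $\cal P$ puts mass $p(t\mid S)=\curra\min_i\currei/Z\propto\gamma_t\min_i\currei$ on $t$ (the constant $\sqrt{2\fdist/(T\smooth\gradb^2)}$ cancels), this equals $\expec{S\sim\dist^T}{\bigl(\sum_t\gamma_t\min_i\currei\normed{\nabla f(\currw)}^2\bigr)\big/\bigl(\sum_t\gamma_t\min_i\currei\bigr)}$. The hard part is exactly here: the normalizer $Z$ is random and correlated with the iterates and their gradients, so the bound just derived — which controls the expectation of the numerator, not of the numerator divided by $Z$ — cannot simply be divided through. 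This is where the hypothesis $p(Z\mid s_t)=p(Z)$ enters: conditioning on the value of $Z$ leaves the law of every $s_t$ unchanged, so the first two steps can be rerun with $\expec{S\sim\dist^T}{\cdot}$ replaced by the conditional expectation given $Z$; on $\{Z=z\}$ the denominator is constant, so it can be pulled out, the inequality divided by it, and $z$ integrated back out, reproducing \eqref{eq:convrate}. The first two steps are routine SGD bookkeeping (with $\curre$ appearing only through the benign deterministic factors $\min_i\currei$ and $\normed{\curre}^2$); the real content — and the main obstacle — is this last step, using $p(Z\mid s_t)=p(Z)$ to decouple the random normalization of the iterate-selection distribution from the telescoping descent argument.
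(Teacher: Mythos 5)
Your proposal is correct and follows essentially the same route as the paper's proof: the descent lemma with $\normed{\curre\odot\currg}^2\le\gradb^2\normed{\curre}^2$, the independence of $\curra,\curre$ from $s_t$ to extract $\curra(\min_i\currei)\normed{\nabla f(\currw)}^2$ from the cross term, telescoping against $\fdist$, and finally conditioning on $Z$ (justified by $p(Z\mid s_t)=p(Z)$) so that the random normalizer of $\cal P$ can be pulled out of the expectation before integrating $Z$ back out. The only difference is presentational — the paper divides by $Z$ and conditions on it before summing, while you derive the unconditional bound first and then note the argument must be rerun under the conditional law — and you correctly identify the random normalizer as the step where the extra hypothesis is actually needed.
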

\begin{proof}
   The full proof is given in Appendix \ref{sec:proof2}, along with analysis
   for the case with momentum $\bonet \in (0,1)$ in
   Appendix~\ref{sec:proof2-momentum}, and in particular
   $\bonet = \bone / \sqrt t$, which yields a similar rate.
\end{proof}

The convergence rate depends on $\norm{\curre}$ and $\min_i \currei$, which are
random variables for Adam-like algorithms.  However, if there are constants
$\highinf$ and $\lowinf$ such that
\begin{equation}
0 < \lowinf \leq \currei \leq \highinf < \infty
\end{equation}
for all $i$ and $t$, then a
rate of $O(1 / \sqrt T)$ is guaranteed.

This is the case for Delayed Adam,
where $1/(\gradbtwo + \epsilon) \leq \currei \leq 1 / \epsilon$ for all $t$ and
$i$.  \thmref{thm:dadamconv} also requires that $\curra$ and $\curre$ are
independent of $s_t$, which can be assured to hold by applying a ``delay'' to
their respective computations, if necessary (\ie replacing $\curre$ by
$\preve$, as in Delayed Adam).

Additionally, the assumption that $p(Z|s_t) = p(Z)$, meaning that a single
sample should not affect the distribution of
$Z = \sum_{t=1}^T \alpha_t \min_i \currei$, is required since $\mathcal P$ is
conditioned on the samples $S$ (unlike in standard analysis, where
$Z = \sum_{t=1}^T \curra$ and $\curra$ is deterministic), and is expected to
hold as $T \to \infty$.

Practitioners typically use the last iterate $w_T$ or
perform early-stopping: in this case, whether the assumption holds or not does
not affect the behavior of the algorithm.  Nonetheless, we also show in
Appendix~\ref{sec:proof2-unconditional} a similar rate that does not require
this assumption to hold, which also yields a $O(1 / \sqrt T)$ convergence rate
taken that the parameter-wise learning rates are bounded from above and below.

\section{AvaGrad: An Adaptive Method with Adaptive Variance}
\label{sec:ava}
\begin{algorithm}[t]
   \caption{\textsc{Delayed Adam}}
   \label{alg:dadam}
   \textbf{Input:}
      $\w_1 \in \R^d$, $\curra, \epsilon>0$, $\bonet, \btwot \in [0,1)$
   \begin{algorithmic}[1]
      \State Set $m_0 = 0, v_0 = 0$
      \For{$t = 1$ \textbf{to} $T$}
         \State Draw $s_t \sim \dist$
         \State Compute $\currg = \nabla \fst(\currw)$
         \State $\currm = \bonet \prevm + (1-\bonet) \currg$
         \State $\curre = \frac{1}{\sqrt{\prevv} + \epsilon}$
         \State $\nextw = \currw - \curra \cdot \curre \odot \currm$
         \State $\currv = \btwot \prevv + (1-\btwot) \currg^2$
      \EndFor
   \end{algorithmic}
\end{algorithm}

\begin{algorithm}[t]
   \caption{\textsc{AvaGrad}}
   \label{alg:avagrad}
   \textbf{Input:}
      $\w_1 \in \R^d$, $\curra, \epsilon>0$, $\bonet, \btwot \in [0,1)$
   \begin{algorithmic}[1]
      \State Set $m_0 = 0, v_0 = 0$
      \For{$t = 1$ \textbf{to} $T$}
         \State Draw $s_t \sim \dist$
         \State Compute $\currg = \nabla \fst(\currw)$
         \State $\currm = \bonet \prevm + (1-\bonet) \currg$
         \State $\curre = \frac{1}{\sqrt{\prevv} + \epsilon}$
         \State $\nextw = \currw - \curra \cdot \frac{\curre}{ \norm{\curre / \sqrt d}_2} \odot \currm$
         \State $\currv = \btwot \prevv + (1-\btwot) \currg^2$
      \EndFor
   \end{algorithmic}
\end{algorithm}

Now, we consider the implications of \thmref{thm:dadamconv} for Delayed Adam,
where $\curre = \frac{1}{\sqrt \prevv + \epsilon}$, and hence
$1 / (\gradbtwo + \epsilon) \leq \currei \leq 1 / \epsilon$
for all $t$ and $i$.

For a fixed $\gamma_t = \gamma$, chosen a-priori (that is, without knowledge of
the realization of $\{\curre\}_{t=1}^T$), we can optimize $\gamma$ to minimize
the worst-case rate using $\norm{\curre}^2 \leq d/\epsilon^2$ and
$\min_i \currei \geq \gradbtwo + \epsilon$.  This yields
$\gamma^* = O(\epsilon)$, and a convergence rate linear in $1 / \epsilon$,
suggesting that, at least in the worst case, $\epsilon$ should be chosen to be
as large as possible, and the learning rate $\alpha$ should scale linearly with
$\epsilon$.

What if we allow $\gamma_t$ to vary in each time step?  For example, choosing
$\gamma_t = 1 / \norm{\curre}$ yields a convergence rate of
\begin{equation}
  \sqrt{\frac{\smooth \fdist \gradb^2}{2T}} \cdot
            \expec{S \sim \dist^T}{\left(\frac{1}{2T} \sum_{t=1}^T \frac{\min_i \currei}{\norm{\curre}}\right)^{-1}}.
\label{eq:convrate2}
\end{equation}
Using $1 / (\gradbtwo + \epsilon) \leq \currei \leq 1 / \epsilon$ we see
that in the worst-case this is also linear in $1/ \epsilon$. However, this dependence differs
from the one with fixed $\gamma_t = \gamma$ in a few aspects.  Most notably, if
we consider different scalings of $\curre = \frac{1}{\sqrt {\prevv} + \epsilon}$
(\eg small $\epsilon$ and scaling $\prevv$), the convergence rate with fixed
$\gamma$ can get arbitrarily worse: in \eqref{eq:convrate}, we get that the numerator grows quadratically with the scaling, while the denominator only grows linearly.

On the other hand, for
$\gamma_t = 1 / \norm{\curre}$ the convergence rate remains unchanged since in \eqref{eq:convrate2}, both $\min_i \currei$ and $\norm{\curre}$ grow linearly with the scaling and hence its effect is cancelled out.

For the particular case $d=1$, we have
$\frac{\min_i \currei}{\norm{\curre}} = 1$ in \eqref{eq:convrate2}, yielding the exact same convergence rate as SGD \emph{including constant factors}. However, a constant
$\gamma$ results in a dependence on the scale of $\curre$, which
again can be large if the second moment estimate is either large or small.  Lastly, normalizing
the learning rate by $1 / \norm{\curre}$ removes its dependence on
$\epsilon$ in the worst-case setting, making the two hyperparameters
more separable.

Motivated by the above observations, the choice of $\gamma_t = \sqrt d / \norm{\curre}$, where $\sqrt d$ is added for normalization effects, yields a method which we name AvaGrad --
\textbf{A}daptive \textbf{VA}riance \textbf{Grad}ients, presented as
pseudo-code in Algorithm \ref{alg:avagrad}.  We call it an
``adaptive variance'' method since, if we scale up or down the variance of the
gradients, the convergence guarantee in
\thmref{thm:dadamconv} does not change, while for global learning rates that are independent of $\curre$ (as in Adam and other adaptive methods), it can be arbitrarily bad.

\section{Experiments}
\label{sec:exp}

\begin{figure*}[!bt]
   \centering
   \begin{subfigure}{.5\textwidth}
      \centering
      \includegraphics[width=\linewidth]{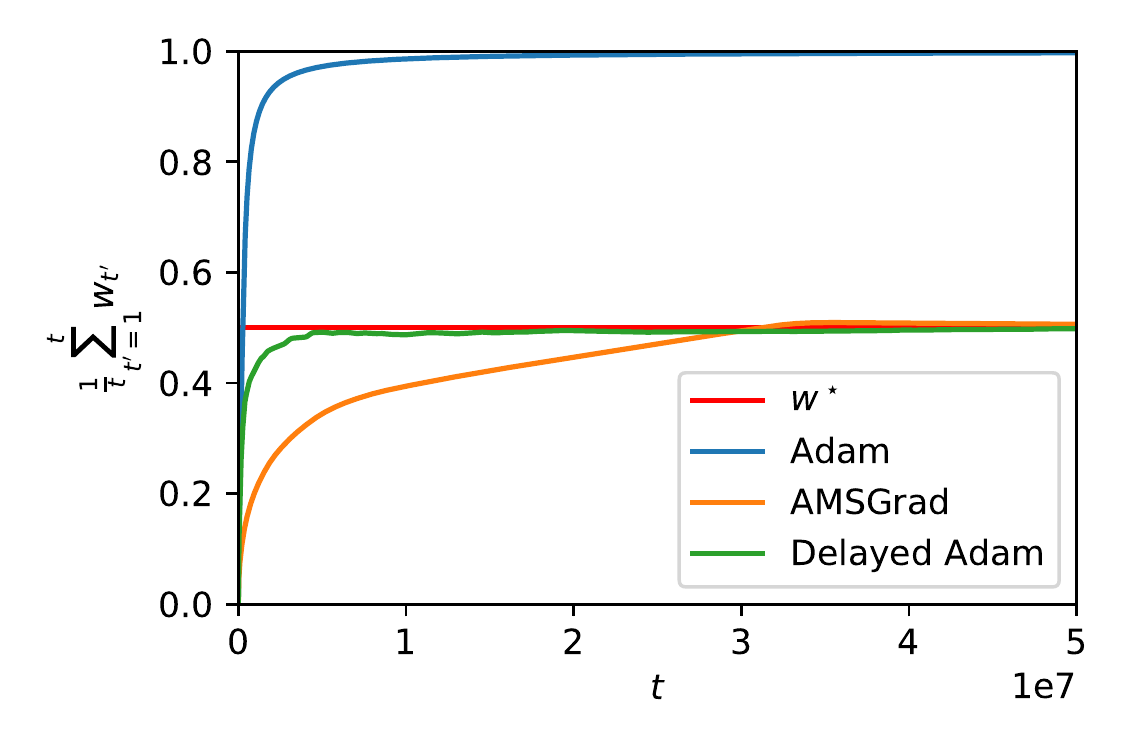}
   \end{subfigure}%
   \hfill
   \begin{subfigure}{.5\textwidth}
      \centering
      \includegraphics[width=\linewidth]{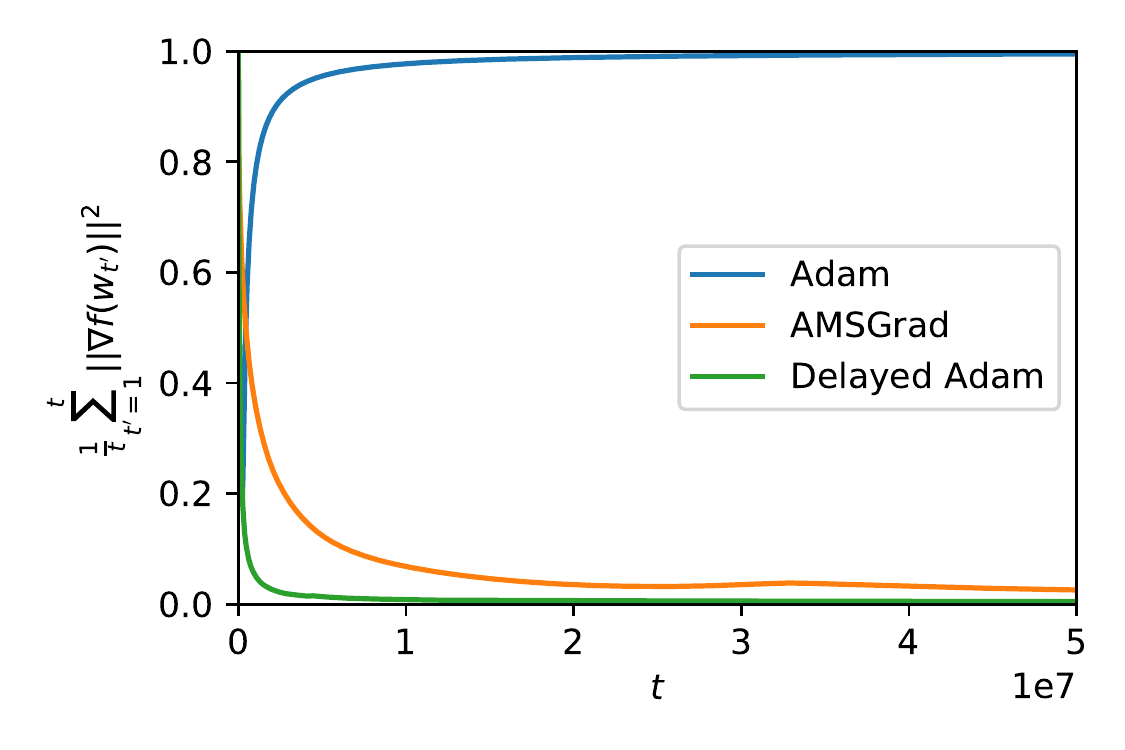}
   \end{subfigure}
   \caption{
      Plots of Adam, AMSGrad, and Delayed Adam trained on the synthetic
      example in Equation ~\ref{eq:synth}, with a stationary point at
      $w^\star \approx 0.5$.
      \textbf{Left:}
         The expected iterate sampled uniformly from $\{\w_1, \dots, \w_t\}$,
         for each iteration $t$. As predicted by our theoretical results, Adam
         moves towards $\w=1$ with $\normed{\nabla f(\w)} = 1$, while Delayed
         Adam converges to $\w^\star$.
      \textbf{Right:}
         The expected norm squared of the gradient, for $\w$ randomly sampled
         from $\{w_1, \dots, \w_t\}$.  Delayed Adam converges significantly
         faster than AMSGrad, while Adam fails to converge.
   }
   \label{fig:synth}
\end{figure*}

\subsection{Synthetic Data}

To illustrate empirically the implications of \thmref{thm:adamdiv} and
\thmref{thm:dadamconv}, we set up a synthetic stochastic optimization problem
with the same form as the one used in the proof of \thmref{thm:adamdiv}:
\begin{equation}
\begin{split}
   & \min_{\w \in [0, 1]} f(\w) \coloneqq \expec{}{\fs(\w)} \\
   & \fs(\w) =
   \begin{cases}
      999 \frac{\w^2}2, \quad \text{with probability } \quad 0.002\\
      -\w, \quad \text{otherwise}
   \end{cases}
   \label{eq:synth}
\end{split}
\end{equation}
This function has a stationary point
$\w^\star = \frac{1-0.002}{999 \cdot 0.002} \approx 0.5$, and it satisfies
\thmref{thm:adamdiv} for $\bone = 0, \btwo = 0.99, \epsilon=10^{-8}$.
We proceed to perform stochastic optimization with Adam, AMSGrad, and Delayed
Adam, with constant learning rate $\curra = 10^{-5}$.  For simplicity, we let
$\cal P$ be uniform over $(1, \dots, T)$, since $\curra$ is constant.

Figure~\ref{fig:synth} shows the progress of $\frac1t \sum_{t'=1}^t \w_{t'}$
and $\frac1t \sum_{t'=1}^t \normed{\nabla f(\w_{t'})}^2$ for each iteration
$t$: as expected, Adam fails to converge to the stationary point $\w^\star$,
while both AMSGrad and Delayed Adam converge.  Note that Delayed Adam converges
significantly faster, likely because it has no constraint on the learning
rates.

\subsection{Image Classification on CIFAR}
\label{sec:cifar}
\begin{figure*}[bt!]
   \centering
   \begin{subfigure}{.5\textwidth}
      \centering
      \scriptsize{\textsf{Adam}}\\
      \includegraphics[width=1.0\linewidth]{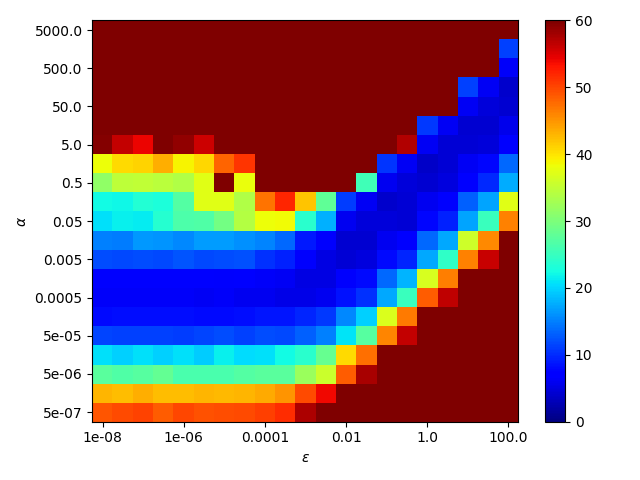}
   \end{subfigure}%
   \hfill
   \begin{subfigure}{.5\textwidth}
      \centering
      \scriptsize{\textsf{AvaGrad}}\\
      \includegraphics[width=1.0\linewidth]{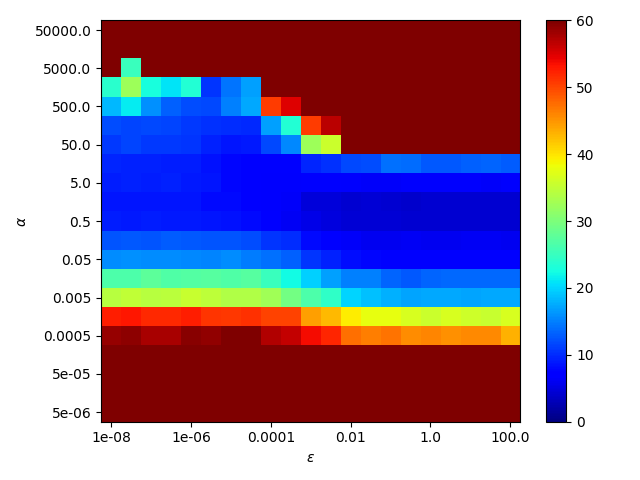}
   \end{subfigure}
   \caption{
      Validation error of a Wide ResNet 28-4 trained on the CIFAR-10
      dataset with Adam (\textbf{left}) and AvaGrad (\textbf{right}), for different values of the learning rate $\alpha$ and
      parameter $\epsilon$, where larger $\epsilon$ yields less
      adaptability. Best performance is achieved with small adaptability ($\epsilon > 0.001$).
   }
   \label{fig:cnns}
\end{figure*}

Our theory suggests that, in the worst case, $\epsilon$ should be chosen as large as possible, at which point the learning rate $\alpha$ should scale linearly with it. As a first experiment to assess this hypothesis, we analyze the interaction between $\alpha$ and $\epsilon$ when training a Wide ResNet 28-4 \citep{wide}
on the CIFAR dataset \citep{cifar}. The CIFAR-10 and CIFAR-100 datasets consist of 60,000 RGB images with $32 \times 32$ pixels and comes with a standard train/test split of 50,000 and 10,000 images, respectively.

Following \citet{wide}, we pre-process the dataset by performing channel-wise normalization using statistics computed from the training set. We also flip each image horizontally with $50\%$ probability and perform random cropping by first padding 4 black pixels to each image and then extracting a random $32 \times 32$ crop.

We use a validation set of 5,000 images to evaluate the performance of SGD and different adaptive gradient methods:
Adam, AMSGrad, AdaBound \citep{adabound, adabound2}, AdaShift \citep{adashift}, and our proposed algorithm, AvaGrad. We also
assess whether performing weight decay as proposed in \citet{adamw} instead
of standard $L_2$ regularization positively impacts the performance of adaptive
methods: we do this by evaluating AdamW and AvaGradW.

The learning rate is decayed by a factor of 5 at epochs 60,
120 and 160, and the model is trained for a total of 200 epochs with a weight
decay of $0.0005$. We use a mini-batch size of 128, and each model is trained on a single GPU. For SGD, we use a momentum of $0.9$, while for adaptive methods we use the default $\bone = 0.9$ and $\btwo = 0.999$. For AdaBound, we use the default final learning rate $\alpha^* = 0.1$ and $\gamma = 10^{-3}$ for the bound functions. Finally, for AdaShift, we use the default stack size $n=10$.

We run each adaptive method with different powers of $10$ for $\epsilon$, multiplied by $1$ and $2$, from $10^{-8}$ up to $100$, a value large enough such that adaptability can be effectively ignored. We also vary the learning rate $\alpha$
of each method with different powers of $10$, multiplied by $1$ and $5$, from $5 \times 10^{-7}$ up to $5000$. In total, we evaluate $441$ different hyperparameter settings for each adaptive method.

Figure \ref{fig:cnns} shows the results for
Adam and AvaGrad.  Our main findings are twofold:

\vspace{-5.0pt}
\begin{itemize}[leftmargin=0.2in]
   \item{
      The optimal $\epsilon$ for every adaptive method is considerably larger
      than the values typically used in practice, ranging from $0.1$ (Adam,
      AMSGrad, AvaGradW) to $10.0$ (AvaGrad, AdamW).  For Adam and AMSGrad, the
      optimal learning rate is $\alpha = \epsilon = 0.1$, a value $100$ times
      larger than the default.
   }
   \item{
      All adaptive methods, except for AdaBound, outperform SGD in terms of
      validation performance.  Note that for SGD the optimal learning rate is
      $\alpha = 0.1$, matching the value used in work such as
      \citet{resnet1, wide, resnext}, which presented state-of-the-art results
      at time of publication.
   }
\end{itemize}

\begin{figure*}[bt!]
   \centering
   \begin{subfigure}{.5\textwidth}
      \centering
      \scriptsize{\textsf{Adam}}\\
      \includegraphics[width=\linewidth]{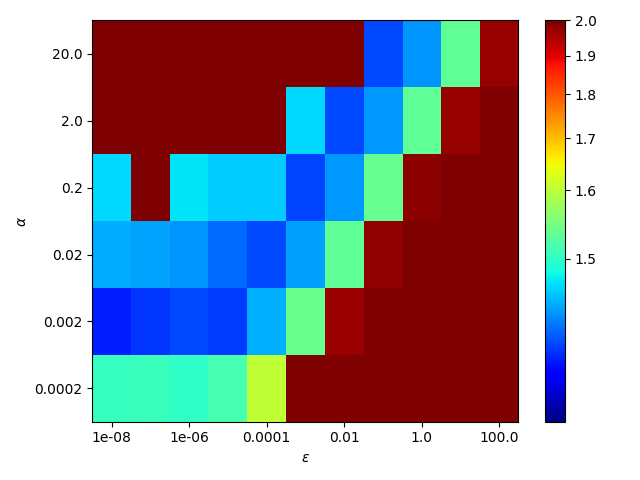}
   \end{subfigure}%
   \hfill
   \begin{subfigure}{.5\textwidth}
      \centering
      \scriptsize{\textsf{AvaGrad}}\\
      \includegraphics[width=\linewidth]{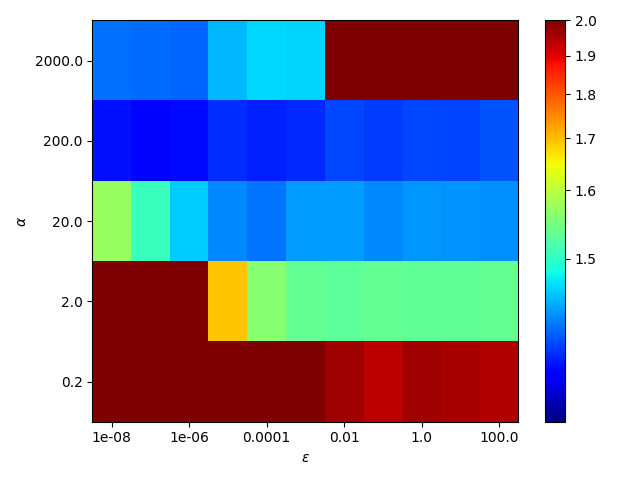}
   \end{subfigure}
   \caption{
      Validation bits-per-character (\emph{lower is better}) of a 3-layer LSTM
      with 300 hidden units, trained on the Penn Treebank dataset with Adam (\textbf{left}) and AvaGrad (\textbf{right}), for
      different values of the learning rate $\alpha$ and
      parameter $\epsilon$, where larger $\epsilon$ yields less adaptability. Best performance is achieved with high adaptability ($\epsilon < 0.0001$).
   }
   \label{fig:rnns}
\end{figure*}

\begin{table*}
   \caption{
      Test performance of SGD and popular adaptive methods in benchmark tasks. Red indicates results with the recommended optimizer, following the paper that proposed each model, and any improved performance is given in blue. The best result for each task is in bold, and numbers in parentheses present standard deviations of 3 runs for CIFAR.
   }
   \label{tab:results}
   \begin{center}
   \begin{tabular}{|c|c|c|c|c|}
\hline
Method      & \begin{tabular}[x]{@{}c@{}}CIFAR-10\\(Test Err \%)\end{tabular} &
              \begin{tabular}[x]{@{}c@{}}CIFAR-100\\(Test Err \%)\end{tabular} &
              \begin{tabular}[x]{@{}c@{}}ImageNet\\(Top-1 Val Err \%)\end{tabular} &
              \begin{tabular}[x]{@{}c@{}}Penn Treebank\\(Test Bits per Character)\end{tabular}\\
\hline
SGD         & {\color{red} 3.86 (0.08)} & {\color{red} 19.05 (0.24)} & {\color{red} 24.01} & 1.238   \\ \hdashline
Adam        & {\color{blue} \textbf{3.64 (0.06)}} & {\color{blue}18.96 (0.21)} & \textbf{{\color{blue} 23.45}} & {\color{red} 1.182} \\
AMSGrad     & 3.90 (0.17) & {\color{blue} 18.97 (0.09)} & {\color{blue} 23.46} & 1.187 \\
AdaBound    & 5.40 (0.24) & 22.76 (0.17) & 27.99 & 2.863 \\
AdaShift    & 4.08 (0.11) & {\color{blue} 18.88 (0.06)} & N/A & 1.274 \\
AdamW    	& 4.11 (0.17) & 20.13 (0.22) & 27.10 & 1.230 \\  \hline
AvaGrad     & {\color{blue} 3.80 (0.02)} & {\color{blue}\textbf{18.76 (0.20)}} & {\color{blue} 23.58} & {\color{blue} 1.179} \\
AvaGradW     & 3.97 (0.02) & {\color{blue} 19.04 (0.37)} & {\color{blue} 23.49} & {\color{blue} \textbf{1.175}} \\
\hline
   \end{tabular}
   \end{center}
\end{table*}

However, the fact that adaptive methods outperform SGD in this setting is not
conclusive, since they are executed with more hyperparameter settings (varying
$\epsilon$ as well as $\alpha$).  Moreover, the main motivation for adaptive
methods is to be less sensitive to hyperparameter values; performing an
extensive grid search defeats their purpose.

Aiming for a fair comparison between SGD and adaptive methods, we also train a Wide ResNet 28-10 on both
CIFAR-10 and CIFAR-100, evaluating the test performance of each adaptive method with its
optimal values for $\alpha$ and $\epsilon$ found in the previous experiment. For SGD, we confirmed that the learning  rate $\alpha=0.1$ still yielded the best validation performance with the new architecture, hence the fact that we transfer hyperparameters from the Wide ResNet 28-4 runs does not unfairly advantage adaptive methods in the comparison with SGD. With a larger network and a different task (CIFAR-100), this experiment should
also capture how hyperparameters of adaptive methods transfer between tasks and
models.

On CIFAR-10, SGD achieves $3.86\%$ test error (reported as $4\%$ in
\citet{wide}) and is outperformed by both
      Adam ($3.64\%$) and
   AvaGrad ($3.80\%$).
On CIFAR-100, SGD ($19.05\%$) is outperformed by
    Adam ($18.96\%$),
 AMSGrad ($18.97\%$),
 AdaShift ($18.88\%$),
 AvaGrad ($18.76\%$), and
AvaGradW ($19.04\%$).
We believe these results are surprising, as they show that adaptive methods
can yield state-of-the-art performance when training CNNs as long as their
adaptability is correctly controlled with $\epsilon$.

\subsection{Image Classification on ImageNet}

As a final evaluation of the role of adaptability when training convolutional
networks, we repeat the previous experiment on the ImageNet dataset
\citep{imagenet}, a challenging benchmark composed of 1.2M training and 50,000 validation RGB images sampled from a total of 1,000 classes. We follow \citet{gross} for data augmentation (scale and color transformations) and use 224x224 single-crops to compute the top-1 accuracy on the validation set.

We train a ResNet-50 \citep{resnet2} with SGD and different
adaptive methods, transferring the hyperparameters from our original CIFAR-10
results.  The network is trained for 100 epochs
with a batch size of $256$ on 4 GPUs (batch size of $64$ per GPU), the learning rate is decayed by a factor of 10 at
epochs 30, 60 and 90, and a weight decay of $0.0001$ is applied. 

SGD yields
$24.01\%$ top-1 validation error, underperforming
    Adam ($23.45\%$),
 AMSGrad ($23.46\%$),
 AvaGrad ($23.58\%$) and
AvaGradW ($23.49\%$) -- a total of 4 out of the 6 adaptive methods evaluated on the dataset. Table~\ref{tab:results} summarizes the results.

In contrast to numerous papers that surpassed the state-of-the-art on ImageNet by training networks with SGD \citep{vgg, googlenet, resnet1, resnet2, wide, resnext}, our results show that adaptive methods can yield superior results in terms of generalization performance.

Most strikingly, we observed that Adam outperformed more sophisticated methods such as AMSGrad, AdaBound, and AdamW.  Note that the hyperparameter values we used for SGD match the ones in
\citet{resnet1}, \citet{resnet2} and \citet{gross}: an initial learning rate of
$0.1$ with a momentum of $0.9$.

\subsection{Language Modelling with RNNs}

It is perhaps not very surprising that to perform optimally in the image
classification tasks studied previously, adaptive gradient methods required
large values of $\epsilon$, and hence were \emph{barely} adaptive.

Here, we
consider a task where state-of-the-art results are not typically achieved by SGD, but by
adaptive methods with low values for $\epsilon$: language modelling with
recurrent networks. In particular, we perform character-level language
modelling on the Penn Treebank dataset \citep{ptb,ptbc}, which consists of 5.01M/393k/442k training/validation/test tokens, respectively, and a vocabulary size of 10,000.

Following \citet{awd}, we train 3-layer LSTMs \citep{lstm} with a character embedding size of 200 and varying size for the LSTM cells. The model is trained for a total of 500 epochs,
and the learning rate is decayed by $10$ at epochs 300 and 400.  We use a batch size
of 128, a BPTT length of 150, and weight decay of $1.2 \times 10^{-6}$.

As in \citet{awd}, we apply weight dropout with $p=0.5$ to the LSTM's hidden-to-hidden matrix, variational dropout \citep{variationaldropout} with $p=0.1$ for the input/output layers, $p=0.25$ for the LSTM layers, and $p=0.1$ to the columns of the embedding matrix (embedding dropout).

We first evaluate the validation performance of SGD, Adam, AMSGrad, AdaShift, AdaBound, AdamW,
AvaGrad and AvaGradW with varying learning rate $\alpha$ and adaptability
parameter $\epsilon$, when training a 3-layer LSTM with 300 hidden
units in each layer. We vary the learning rate $\alpha$ in powers of $10$ multiplied by 2: from $0.0002$ up to $20$; for the adaptability parameter $\epsilon$, we vary the values in powers of $10$, multiplied by $1$ and $5$, from $10^{-8}$ up to $100$.

Figure \ref{fig:rnns} shows that, in
this task, smaller values for $\epsilon$ are indeed optimal: Adam, AMSGrad and
AvaGrad performed best with $\epsilon = 10^{-8}$. The optimal learning rates for both Adam and AMSGrad, $\alpha = 0.002$, agree with the value used in \citet{awd}. Both AvaGrad and AvaGradW performed best with $\alpha = 200$: the former with $\epsilon=10^{-8}$, the latter with $\epsilon=10^{-5}$.

Next, we train a larger model: a 3-layer
LSTM with 1000 hidden units per layer (the same model used in \citet{awd},
where it was trained with Adam),
choosing values for $\alpha, \epsilon$ which yielded the best validation performance in
the previous experiment. For SGD, we again confirmed that a learning rate of $20$ performed best on the validation set. 

Table~\ref{tab:results} (right column) reports all
results. In this setting, AvaGrad and AvaGradW outperform Adam, achieving
bit-per-characters of $1.179$ and $1.175$ compared to $1.182$. The poor performance of AdaBound could be caused by convergence issues or due to the default values for its hyperparameters: \citet{adabound2} showed that that the bound functions strongly affect the optimizer's behavior and might require careful tuning.

When combined with the previous results, we see that adaptive methods actually \emph{dominate} SGD across tasks of different domains. In particular, both Adam and AvaGrad outperformed SGD in all 4 considered tasks.

\subsection{Hyperparameter Separability and Domain-Independence}

We observed that, given enough budget for hyperparameter tuning, Adam can actually outperform SGD in tasks such as image classification with CNNs, where adaptive methods have traditionally found little success. But can we decrease the cost of hyperparameter search?

One of the main motivations behind AvaGrad is that it removes the dependence
between the learning rate $\alpha$ and the adaptability parameter $\epsilon$,
at least in the worst-case rate of \thmref{thm:dadamconv}.  Observing the
heatmaps in Figure \ref{fig:cnns} and
\ref{fig:rnns}, we can see that indeed AvaGrad offers great
separability between $\alpha$ and $\epsilon$, unlike Adam. 

In particular, for values larger than $0.0001$, $\epsilon$ has little to no interaction with the
learning rate $\alpha$, as opposed to Adam where the optimal $\alpha$ increases
linearly with $\epsilon$. For language modelling on Penn Treebank, the optimal
learning rate for AvaGrad was $\alpha = 200$ \emph{for every choice of
$\epsilon$}, while for image classification on CIFAR-10, we had $\alpha = 1.0$
for all except two values of $\epsilon$.  This
shows that AvaGrad enables a grid search over $\alpha$ and $\epsilon$ (naively, with
quadratic complexity) to be broken into two line searches over $\alpha$ and
$\epsilon$ separately (linear complexity). In the context of Section \ref{sec:cifar}, this leads to a decrease from $21^2=441$ to $2 \times 21 = 42$ total trials for hyperparameter search, only twice as many as SGD's budget. The cost of optimizing both SGD and Adam \emph{with a fixed $\epsilon$} is the same as fully optimizing AvaGrad. Therefore, unless $\epsilon$ is chosen optimally a-priori, AvaGrad dominates both SGD and Adam given the same budget and coarseness for hyperparameter tuning.

\section{Conclusion}
\label{sec:conclusion}

As neural architectures become more complex, with parameters having highly
heterogeneous roles, parameter-wise learning rates are often necessary for
training.  However, adaptive methods have both theoretical and empirical gaps,
with SGD outperforming them in some tasks and having stronger theoretical convergence guarantees.  In this paper, we close
this gap, by first providing a convergence rate guarantee that matches SGD's, and by showing that, with proper hyperparameter tuning, adaptive methods can dominate in both computer vision and natural language processing tasks. Key to our finding is AvaGrad, our proposed optimizer whose adaptability is decoupled from its learning rate.

Our experimental results show that proper tuning of the learning rate together with the adaptability of the method is necessary to achieve optimal results in different domains, where distinct neural network architectures are used across tasks. By enabling this tuning to be performed in linear time, AvaGrad takes a leap towards efficient domain-agnostic training of general neural architectures.

\bibliographystyle{icml2020}
\ifarxiv
   \bibliography{avagrad_arxiv}
\else
   \bibliography{avagrad}
\fi

\onecolumn
\newpage
\part*{Appendix}
\appendix

\section{Proof of \thmref{thm:adamdiv}}
\label{sec:proof1}

\begin{proof}
Consider the following stochastic optimization problem:
\begin{equation}
   \min_{\w \in [0,1]} f(\w) \coloneqq \expec{s \sim \dist}{\fs(\w)} \qquad
   \fs(\w) =
   \begin{cases}
      C \frac{\w^2}2,
         \quad \text{with probability }
         \quad p \coloneqq \frac{1+\delta}{C+1} \\
      -\w, \quad \text{otherwise}
   \end{cases}
\end{equation}
where $C > \frac{1-p}p > 1 + \frac{\epsilon}{\w_1 \sqrt{1 - \btwo}}$.  Note
that $\nabla f(\w) = p C \w - (1-p)$, and $f$ is minimized at
$\w^\star = \frac{1-p}{Cp} = \frac{C-\delta}{C(1+\delta)}$.

The proof follows closely from \citet{amsgrad}'s linear example for convergence in
suboptimality.  We assume w.l.o.g. that $\bone = 0$.  Consider:
\begin{equation}
   \Delta_t
      = \nextw - \currw
      = - \eta \frac{\currg}{\sqrt{\currv} + \epsilon}
      = - \eta \frac{\currg}{\sqrt{\btwo \prevv + (1-\btwo) \currg^2} +
                             \epsilon}
\end{equation}
\begin{equation}
\begin{split}
   \frac{\expec{}{\Delta_t}}{\eta}
     &= \frac{\expec{}{\nextw - \currw} }{\eta}
      = - \expec{}{\frac{g_t}{\sqrt{\btwo \prevv + (1-\btwo) g_t^2} + \epsilon}} \\
     &=   p  \expec{}{ \underbrace{\frac{- C \currw}{\sqrt{\btwo \prevv + (1-\btwo) C^2 \currw^2} + \epsilon}}_{T_1}} +
       (1-p) \expec{}{ \underbrace{\frac{1}{\sqrt{\btwo \prevv + (1-\btwo)} + \epsilon}}_{T_2}}
\end{split}
\label{eq:onlinebound}
\end{equation}
where the expectation is over all the randomness in the algorithm up to time
$t$, as all expectations to follow in the proof.  Note that $T_1 = 0$ for
$\currw = 0$. For $\currw > 0$ we bound $T_1$ as follows:
\begin{equation}
\begin{split}
   T_1 \geq \frac{- C \currw}{\sqrt{(1-\btwo) C^2 \currw^2}}
          = \frac{- 1 }{\sqrt{1-\btwo}}
\end{split}
\end{equation}

Hence,
$T_1 \geq \min(0, \frac{- 1 }{\sqrt{1-\btwo}}) = \frac{- 1 }{\sqrt{1-\btwo}}$.

As for $T_2$, we have, from Jensen's inequality:
\begin{equation}
\begin{split}
   \expec{}{T_2} \geq
      \frac{1}{\sqrt{\btwo \expec{}{\prevv} + 1-\btwo} + \epsilon}
\end{split}
\end{equation}

Now, remember that $\prevv = (1-\btwo) \sum_{i=1}^{t-1} \btwo^{t-i-1} g_i^2$,
hence:
\begin{equation}
\begin{split}
   \expec{}{\prevv}
   &= (1-\btwo) \sum_{i=1}^{t-1} \btwo^{t-i-1} \expec{}{g_i^2} \\
   &= (1-\btwo) \sum_{i=1}^{t-1} \btwo^{t-i-1} \left(1 -p + p C^2 \expec{}{\currw^2} \right) \\
   &\leq (1-\btwo) \sum_{i=1}^{t-1} \btwo^{t-i-1} \left(1 -p + p C^2 \right) \\
   &\leq (1-\btwo^{t-1}) \left(1 -p + p C^2 \right) \leq (1+\delta) C^2
\end{split}
\end{equation}
and thus:
\begin{equation}
\begin{split}
   \expec{}{T_2} \geq \frac{1}{\sqrt{\btwo (1+\delta) C + 1-\btwo} + \epsilon}
\end{split}
\end{equation}

Plugging in the bounds for $T_1$ and $T_2$ in Equation \ref{eq:onlinebound}:
\begin{equation}
\begin{split}
   \frac{\expec{}{\Delta_t}}{\eta}
   &\geq
      \frac{1+\delta}{C+1} \frac{- 1 }{\sqrt{1-\btwo}} +
      \left(1-\frac{1+\delta}{C+1}\right)
      \frac{1}{\sqrt{\btwo (1+\delta) C + 1-\btwo} + \epsilon}
\end{split}
\end{equation}

Hence, for large enough $C$, and $C \gg \delta$,
$\w^\star \approx \frac{1}{1+\delta}$ while the above quantity becomes
non-negative, and hence $\expec{}{\currw} \geq \w_1$.  In other words, Adam
will, in expectation, drift away from the stationary point, towards $\w = 1$,
at which point $\normed{\nabla f(1)}^2 = \delta$.  For example, $\delta=1$
implies that
$\lim_{T \to \infty}
   \frac1T \sum_{t=1}^T \expec{}{\normed{\nabla f(\currw)}^2} = 1$. To see that $w=1$ is not a stationary point due to the feasibility constraints, check that $\nabla f(1) = 1 > 0$: that is, the negative gradient points \textit{towards} the feasible region.
\end{proof}

\section{Proof of \thmref{thm:dadamconv}}
\label{sec:proof2}

\begin{proof}

Throughout the proof we use the following notation for clarity: 
\begin{equation}
	\highinft = \max_i \currei \quad\quad \lowinft = \min_i \currei
\end{equation}
We start from the fact that $f$ is $\smooth$-smooth:
\begin{equation}
   f(\nextw) \leq
      f(\currw) +
      \langle \nabla f(\currw), \nextw - \currw \rangle +
      \frac{\smooth}2 \normed{\nextw - \currw}^2
\end{equation}
and use the update $\nextw = \currw - \curra \cdot \curre \odot \currm$:
\begin{equation}
\begin{split}
   f(\nextw) & \leq f(\currw) - \curra \left\langle \nabla f(\currw), \currm \odot \curre \right\rangle + \frac{\curra^2 \smooth}2 \normed{ \currm \odot \curre}^2 \\
   &\leq f(\currw) - \curra \left\langle \nabla f(\currw), \currm \odot \curre \right\rangle + \frac{\curra^2 \smooth \gradb^2 \norm{\curre}^2}2 \\
   &\leq f(\currw) - \curra \bonet \left\langle \nabla f(\currw), \prevm \odot \curre \right\rangle - \curra (1 - \bonet) \left\langle \nabla f(\currw), \currg \odot \curre \right\rangle + \frac{\curra^2 \smooth \gradb^2 \norm{\curre}^2}2 \\
   &\leq f(\currw) + \curra \bonet \normed{\nabla f(\currw)} \cdot \normed{\prevm \odot \curre} - \curra (1 - \bonet) \left\langle \nabla f(\currw), \currg \odot \curre \right\rangle + \frac{\curra^2 \smooth \gradb^2 \norm{\curre}^2}2 \\
   &\leq f(\currw) + \curra \bonet \gradbtwo^2 \highinft - \curra (1 - \bonet) \left\langle \nabla f(\currw), \currg \odot \curre \right\rangle + \frac{\curra^2 \smooth \gradb^2 \norm{\curre}^2}2
\end{split}
\end{equation}
where in the first step we used the fact that
$\normed{ \currm \odot \curre}^2 = \sum_{i=1}^d m_{t,i}^2 \currei^2 \leq \max_j m_{t,j}^2 \sum_{i=1}^d \eta_{t,i}^2 \leq \gradb^2 \normed{\curre}^2$,
in the second we used
$\currm = \bonet \prevm + (1-\bonet) \currg$,
in the third we used Cauchy-Schwarz,
and in the fourth we used $\normed{\nabla f(\currw)} \leq \gradbtwo$, along with
$\normed{ \prevm \odot \curre} = \sqrt{\sum_{i=1}^d m_{t-1,i}^2 \currei^2} \leq \max_j \eta_{t,j} \sqrt{\sum_{i=1}^d m_{t-1,i}^2} \leq \gradbtwo \highinft$.

Now, taking the expectation over $s_t$, and using the fact that $\expec{s_t}{\currg} = \nabla f(\currw)$, and that $\curre$, $\curra$ are both independent of $s_t$:
\begin{equation}
\begin{split}
   \expec{s_t}{f(\nextw)}
   &\leq f(\currw) + \curra \bonet \gradbtwo^2 \highinft - \curra (1 - \bonet) \left\langle \nabla f(\currw), \nabla f(\currw) \odot \curre \right\rangle + \frac{\curra^2 \smooth \gradb^2 \norm{\curre}^2}2 \\
   &\leq f(\currw) + \curra \bonet \gradbtwo^2 \highinft - \curra (1 - \bone) \normed{\nabla f(\currw)}^2 \lowinft + \frac{\curra^2 \smooth \gradb^2 \norm{\curre}^2}2
\end{split}
\end{equation}
where in the second step we used
$\bonet \leq \bone$ and
$\left\langle \nabla f(\currw), \nabla f(\currw) \odot \curre \right\rangle = \sum_{i=1}^d \nabla f(\w)_i^2 \currei \geq \min_j \eta_{t,j} \sum_{i=1}^d \nabla f(\w)_i^2 = \lowinft \normed{\nabla f(\w)}^2$.

Re-arranging, we get:
\begin{equation}
\begin{split}
   \curra \lowinft (1 - \bone) \normed{\nabla f(\currw)}^2 & \leq f(\currw) -  \expec{s_t}{f(\nextw)} + \curra \bonet \gradbtwo^2 \highinft + \frac{\curra^2 \smooth \gradb^2 \norm{\curre}^2}2
\end{split}
\label{eq:proof-inter}
\end{equation}

Now, defining
   $p(t) = \frac{\curra \lowinft}Z$, where $Z = \sum_{t=1}^T \curra \lowinft$, dividing by $Z (1 - \bone)$ and summing over $t$:
\begin{equation}
\begin{split}
   \sum_{t=1}^T p(t) \normed{\nabla f(\currw)}^2 & \leq \frac1{Z(1-\bone)} \sum_{t=1}^T  \left( f(\currw) - \expec{s_t}{f(\nextw)} + \curra \bonet \gradbtwo^2 \highinft + \frac{\curra^2 \smooth \gradb^2 \norm{\curre}^2}2 \right)
\end{split}
\end{equation}

Now, taking the conditional expectation over all samples $S$ given $Z$:
\begin{equation}
\begin{split}
   \expec{S}{\sum_{t=1}^T p(t) \normed{\nabla f(\currw)}^2 \Big| Z}
   &\leq \frac1{Z(1-\bone)} \Big( \sum_{t=1}^T \big( \expec{S}{f(\currw) | Z} - \expec{S}{ \expec{s_t}{f(\nextw)} | Z} \big) \\
   &\qquad + \sum_{t=1}^T \expec{}{\curra \bonet \gradbtwo^2 \highinft + \frac{\curra^2 \smooth \gradb^2 \norm{\curre}^2}2 \Big| Z} \Big) \\
   &\leq \frac1{Z(1-\bone)} \Big( \sum_{t=1}^T \big( \expec{S}{f(\currw) | Z} - \expec{S}{f(\nextw) | Z} \big) \\
   &\qquad + \sum_{t=1}^T \expec{}{\curra \bonet \gradbtwo^2 \highinft + \frac{\curra^2 \smooth \gradb^2 \norm{\curre}^2}2 \Big| Z} \Big) \\
   &= \frac1{Z(1-\bone)} \Big(f(w_1) - \expec{S}{f(w_{T+1}) | Z} \\
   &\qquad + \sum_{t=1}^T \expec{}{\curra \bonet \gradbtwo^2 \highinft + \frac{\curra^2 \smooth \gradb^2 \norm{\curre}^2}2 \Big| Z} \Big)
\end{split}
\end{equation}
where in the second step we used $\expec{S}{ \expec{s_t}{f(\nextw)} | Z} = \expec{S}{f(\nextw)}$ which follows from the assumption that $p(Z|s_t) = p(Z)$, and the third step follows from a telescoping sum, along with the fact that $\expec{S}{f(w_1)} = f(w_1)$.  Now, using
$f(w_1) - \expec{S}{f(w_{T+1}) | Z} \leq f(w_1) - f(w^\star) \leq D$:
\begin{equation}
\begin{split}
   \expec{S}{\sum_{t=1}^T p(t) \normed{\nabla f(\currw)}^2 \Big| Z}
   &\leq \frac1{Z(1-\bone)}
      \Big(D +
           \sum_{t=1}^T \expec{S}{\curra \bonet \gradbtwo^2 \highinft +
           \frac{\curra^2 \smooth \gradb^2 \norm{\curre}^2}2 \Big| Z}
      \Big)
\end{split}
\end{equation}

Then, taking the expectation over $Z$:
\begin{equation}
\begin{split}
   \expec{S}{\sum_{t=1}^T p(t) \normed{\nabla f(\currw)}^2}
   &\leq \expec{}{\frac1{Z(1-\bone)} \sum_{t=1}^T \left( \frac{D}{T} + \curra \bonet \gradbtwo^2 \highinft + \frac{\curra^2 \smooth \gradb^2 \norm{\curre}^2}2 \right) }
\end{split}
\end{equation}

Now, let $\curra = \gamma_t \sqrt{\frac{2 \fdist}{T \smooth \gradb^2}}$:
\begin{equation}
\begin{split}
   \expec{S}{\sum_{t=1}^T p(t) \normed{\nabla f(\currw)}^2} &\leq \expec{S}{\frac1{Z (1-\bone)} \sum_{t=1}^T \left(\frac{D}{T} + \gamma_t \bonet \gradbtwo^2 \highinft \sqrt{\frac{2 D}{T \smooth \gradb^2}}  + \frac{D}{T} \gamma_t^2 \norm{\curre}^2 \right)} \\
   &\leq \expec{}{\frac{D}{T \cdot Z (1-\bone)} \sum_{t=1}^T \left(1 + \gamma_t \bonet \highinft \sqrt{\frac{2 d T \gradbtwo^2}{ \smooth D}} + \gamma_t^2 \norm{\curre}^2 \right)}
\end{split}
\end{equation}%

where we used the fact that $\gradbtwo \leq \gradb \sqrt d$.

Now, recall that $Z = \sum_{t=1}^T \curra \lowinft = \sqrt{\frac{2 \fdist}{T \smooth \gradb^2}} \sum_{t=1}^T \gamma_t \lowinft$:
\begin{equation}
\begin{split}
   \expec{S}{\sum_{t=1}^T p(t) \normed{\nabla f(\currw)}^2} &\leq \expec{S}{\frac{1}{(1-\bone)} \sqrt\frac{\smooth \fdist \gradb^2}{2T} \cdot \frac{\sum_{t=1}^T \left(1 + \gamma_t \bonet \highinft \sqrt{\frac{2 d T \gradbtwo^2}{ \smooth D}} + \gamma_t^2 \norm{\curre}^2 \right)}{\sum_{t=1} \gamma_t \lowinft}} \\
   &= \frac{1}{(1-\bone)} \sqrt\frac{\smooth \fdist \gradb^2}{2T} \cdot \expec{S}{\frac{\sum_{t=1}^T \left(1 + \gamma_t \bonet \highinft \sqrt{\frac{2 d T \gradbtwo^2}{ \smooth D}} + \gamma_t^2 \norm{\curre}^2 \right)}{\sum_{t=1} \gamma_t \lowinft}}
\end{split}
\label{eq-proof2inter1}
\end{equation}

Setting $\bonet = \bone = 0$ and checking that $\sum_{t=1}^T p(t) \normed{\nabla f(\currw)}^2 = \expec{t \sim \mathcal P (t|S)}{\normed{\nabla f(\currw)}^2}$:
\begin{equation}
\begin{split}
     \expec{\algrand}{\normed{\nabla f(\currw)}^2} & \leq \sqrt\frac{\smooth \fdist \gradb^2}{2T} \cdot \expec{S \sim \dist^T}{\frac{\sum_{t=1}^T 1 + \gamma_t^2 \norm{\curre}^2 }{\sum_{t=1} \gamma_t \lowinft}}
\end{split}
\label{eq-proof2final}
\end{equation}

Recalling that $\lowinft = \min_i \currei$ proves the claim.

\end{proof}

\subsection{The case with first-order momentum}
\label{sec:proof2-momentum}
For the case $\bonet > 0$, assume that $\bonet = \frac{\bone}{\sqrt t}$ in Equation \ref{eq-proof2inter1}:
\begin{equation}
\begin{split}
     \expec{S}{\sum_{t=1}^T p(t) \normed{\nabla f(\currw)}^2} & \leq \frac{1}{(1-\bone)} \sqrt\frac{\smooth \fdist \gradb^2}{2T} \cdot \expec{S}{\frac{\sum_{t=1}^T \left(1 + \gamma_t \highinft \frac{\bone}{\sqrt t} \sqrt{\frac{2 d T \gradbtwo^2}{ \smooth D}} + \gamma_t^2 \norm{\curre}^2 \right)}{\sum_{t=1} \gamma_t \lowinft}} \\
     & \leq \frac{1}{(1-\bone)} \sqrt\frac{\smooth \fdist \gradb^2}{2T} \cdot \expec{S}{\frac{\bone \sqrt{\frac{2 d T \gradbtwo^2}{ \smooth D}} \left(\max_t \gamma_t \highinft \right) \sum_{t=1}^T \frac{1}{\sqrt t} + \sum_{t=1}^T \left(1 + \gamma_t^2 \norm{\curre}^2 \right)}{\sum_{t=1} \gamma_t \lowinft}} \\   
     & \leq \frac{1}{(1-\bone)} \sqrt\frac{\smooth \fdist \gradb^2}{2T} \cdot \expec{S}{\frac{2 T \bone \sqrt{\frac{2 d \gradbtwo^2}{ \smooth D}} \left(\max_t \gamma_t \highinft \right) + \sum_{t=1}^T \left(1 + \gamma_t^2 \norm{\curre}^2 \right)}{\sum_{t=1} \gamma_t \lowinft}}
\end{split}
\end{equation}

where in the last step we used $\sum_{t=1}^T \frac1{\sqrt t} \leq 2 \sqrt T$.

Similarly to the guarantee in Equation~\ref{eq-proof2final}, we can show a $O(1 / \sqrt T)$ convergence rate if we further assume that there exist constants $\highinf$ and $\lowinf$ such that
$0 < \lowinf \leq \currei \leq \highinf < \infty$ for all $i$ and $t$ (\ie the parameter-wise learning rates are bounded away from zero and also from above), and that $\gamma_t$ is bounded similarly. For example, having $\gamma_t = \gamma$ yields:
\begin{equation}
\begin{split}
     \expec{S}{\sum_{t=1}^T p(t) \normed{\nabla f(\currw)}^2} & \leq \frac{1}{(1-\bone)} \sqrt\frac{\smooth \fdist \gradb^2}{2T} \cdot \expec{S}{\frac{2 T \highinf \gamma \bone  \sqrt{\frac{2 d \gradbtwo^2}{ \smooth D}} + \sum_{t=1}^T \left(1 + \gamma^2 d \highinf^2 \right)}{\sum_{t=1} \gamma \lowinf}} \\
     & = \frac{1}{\lowinf (1-\bone)} \sqrt\frac{\smooth \fdist \gradb^2}{2T} \cdot \left(\gamma^{-1} + 2 \highinf \bone  \sqrt{\frac{2 d \gradbtwo^2}{ \smooth D}} + \gamma d \highinf^2 \right)
\end{split}
\end{equation}

where in the first step we used $\norm{\curre}^2 \leq d \highinf^2$, $\highinft \leq \highinf$, and $\lowinft \geq \lowinf$.

Note that for any constant $\gamma$ the above is $O(1 / \sqrt T)$.

\subsection{The case with unconditional distribution over iterates}
\label{sec:proof2-unconditional}

To show a similar bound without the assumption that $p(Z|s_t) = p(Z)$, we can alternatively bound $\lowinft$ and $\norm{\curre}$ using the worst case over possible samples $S$. From \eqref{eq:proof-inter} we have, with $\bonet = 0$:
\begin{equation}
\begin{split}
	\left( \inf_{S} \lowinft \right) \curra (1 - \bone) \normed{\nabla f(\currw)}^2 & \leq f(\currw) - \expec{s_t}{f(\nextw)} + \frac{\curra^2 L \gradb^2 \left(\sup_S \norm{\curre}^2 \right)}2
\end{split}
\end{equation}
Now, define $p(t) = \alpha_t \left(\inf_S \lowinft\right) / Z$ with $Z = \sum_{t=1}^T \alpha_t \left(\inf_S \lowinft\right)$ instead. As long as $\alpha_t$ does not depend on $S$, $Z$ is no longer a random variable. Following the same steps as above leads to the following:
\begin{equation}
\begin{split}
     \expec{\substack{S \sim \dist^T \\ t \sim \mathcal P(t)}}{\normed{\nabla f(\currw)}^2} & \leq \sqrt\frac{\smooth \fdist \gradb^2}{2T} \cdot \frac{\sum_{t=1}^T 1 + \gamma_t^2 \left(\sup_S \norm{\curre}^2 \right) }{\sum_{t=1} \gamma_t \left(\inf_S \lowinft\right)}
\end{split}
\end{equation}

In particular, if there are constants $\highinf$ and $\lowinf$ such that
$0 < \lowinf \leq \currei \leq \highinf < \infty$ for all $i$ and $t$, can bound $\sup_S \norm{\curre}^2 \leq d \highinf^2$ and $\inf_S \lowinft \geq \lowinf$, yielding:
\begin{equation}
\begin{split}
     \expec{\substack{S \sim \dist^T \\ t \sim \mathcal P(t)}}{\normed{\nabla f(\currw)}^2} & \leq \sqrt\frac{\smooth \fdist \gradb^2}{2T} \cdot \frac{\sum_{t=1}^T 1 + \gamma_t^2 d \highinf^2}{\lowinf \sum_{t=1} \gamma_t}
\end{split}
\end{equation}

Hence a $O(1 / \sqrt T)$ follows as long as $\gamma_t$ can be upper and lower bounded accordingly by constants.

\end{document}